\definecolor{box_color}{rgb}{.8,.8,.8}
\newtheorem{theorem}{Theorem}%[section]
\newtheorem{lemma}{Lemma}
\newtheorem{proposition}{Proposition}
\newtheorem{corollary}{Corollary}
\newtheorem{fact}{Fact}
\newtheorem{definition}{Definition}
\newtheorem{remark}{Remark}
\newtheorem{assumption}{Assumption}
\newtheorem{ass}{C.}
\newenvironment{proof}[1][Proof]{\begin{trivlist}
\item[\hskip \labelsep {\bfseries #1}]}{\end{trivlist}}
\def\begcen{\begin{center}}
\def\endcen{\end{center}}
\newcommand{\bfq}{\mbox{$q$}}
\def\bfq{{\bf q}}
\newcommand{\RE}{\mathbb {R}}    %symbol for real n
\newcommand{\col}{ \mbox{col} }
\newcommand{\rank}{ \mbox{rank } }
\def\L2{{\cal L}_2}
\def\L2e{{\cal L}_{2e}}
\def\rea{\mathbb{R}}
\def\diag{\mbox{diag}}
\def\bfq{{\bf q}}
\def\begequarr{\begin{eqnarray}}
\def\endequarr{\end{eqnarray}}
\def\begequarrs{\begin{eqnarray*}}
\def\endequarrs{\end{eqnarray*}}
\def\begarr{\begin{array}}
\def\endarr{\end{array}}
\def\begequ{\begin{equation}}
\def\endequ{\end{equation}}
\def\lab{\label}
\def\begdes{\begin{description}}
\def\enddes{\end{description}}
\def\begenu{\begin{enumerate}}
\def\begite{\begin{itemize}}
\def\endite{\end{itemize}}
\def\endenu{\end{enumerate}}
\def\lef[{\left[\begin{array}}
\def\rig]{\end{array}\right]}
\def\qed{\hfill$\Box \Box \Box$}
\def\begcen{\begin{center}}
\def\endcen{\end{center}}
\def\begrem{\begin{remark}\rm}
\def\endrem{\end{remark}}
\def\begassum{\begin{assumption}}
\def\endassum{\end{assumption}}
\def\begassums{\begin{assumption*}}
\def\endassums{\end{assumption*}}
\def\begassu{\begin{ass}}
\def\endassu{\end{ass}}
\def\beglem{\begin{lemma}}
\def\endlem{\end{lemma}}
\def\begcor{\begin{corollary}}
\def\endcor{\end{corollary}}
\def\begfac{\begin{fact}}
\def\endfac{\end{fact}}
\begin{document}

\title{Automatic Centralized Control of Underactuated Large-scale Multi-robot Systems using a Generalized Coordinate Transformation }
\author{Babak~Salamat, Christopher Johannes Starck and Heiko Hamann% <-this % stops a space
\thanks{Babak Salamat is with University of Luebeck, Institute of Computer Engineering, Luebeck, Germany, e-mail: salamat@iti.uni-luebeck.de.}% <-this % stops a space
\thanks{Heiko Hamann is with University of Luebeck, Institute of Computer Engineering, Luebeck, Germany, e-mail: hamann@iti.uni-luebeck.de.}
\thanks{Christopher Johannes Starck is with University of Luebeck, Luebeck, Germany, e-mail: christopher.starck@student.uni-luebeck.de.}}
% The paper headers
\markboth{SALAMAT B., Starck CJ., AND Hamann H.: SUBMITTED TO IEEE Robotics and Automation Letters }%
{Shell \MakeLowercase{\textit{et al.}}: Bare Demo of IEEEtran.cls for IEEE Journals}

\date{}
\maketitle
\begin{abstract}
Controlling large-scale particle or robot systems is challenging because of their high dimensionality. We use a centralized stochastic approach that allows for optimal control at the cost of a central element instead of a decentralized approach.  Previous works are often restricted to the assumption of fully actuated robots. Here we propose an approach for underactuated robots that allows for energy-efficient control of the robot system. We consider a simple task of gathering the robots (minimizing positional variance) and steering them towards a goal point within a bounded area without obstacles. We make two main contributions. First, we present a generalized coordinate transformation for underactuated robots, whose physical properties should be considered. We choose Euler-Lagrange systems that describe a large class of robot systems. Second, we propose an optimal control mechanism with the prime objective of energy efficiency.
We show the feasibility of our approach in numerical simulations and robot simulations.

\end{abstract}
%
%%%%%%%%%%%%%%%%%%%%%%%%%%%%%
\section{Introduction}
\lab{sec1}

The control of large-scale multi-robot or particle systems is challenging due to the high degree of freedom in such distributed systems of loosely coupled mobile robots~\cite{olfatiSaber2007}. 
The published approaches on this subject can roughly be separated in two complementary classes: (A)~centralized approaches assuming complete information and focusing on precision and efficiency~\cite{elamvazhuthi2019mean} and (B)~decentralized approaches assuming only partial observability and focusing on simple reactive and behavior-based control~\cite{hamann2018}.
While both concepts are generally justified, the centralized approach may be almost unavoidable for certain tasks. Here, we investigate a control problem in robot swarms with minimal hardware~\cite{becker13,Shahrokhi2015}.
In the case of a simple robot, such as the Kilobot robot with its minimal equipment of sensors~\cite{rubenstein12}, certain tasks may be infeasible relying on a decentralized approach. The advantage of having simple hardware is, in turn, that possibly many robots can be built to form a large-scale system with high redundancy. The control problem can be thought of as macroscopic or stochastic control of a `cloud of robots' determined by a distribution~\cite{becker13,Shahrokhi2018}.
The controller's input can be, for example, the mean position of all robots and their variance. The output is a global control effort that is %sent to each 
broadcasted to all
robots or that operates as a force on each robot.
The variance may be calculated based on robot positions~\cite{Shahrokhi2015}, which could be relaxed in a different approach. %One possible approach
An option is to exploit the environment by gathering robots at flat obstacles until  minimum variance is achieved~\cite{Graham90}.
The control %loop 
iterates over measuring robot positions 
followed by possibly longer periods of not measuring again but relying on the dynamical model of each robot plus adding Brownian noise on positions, velocities, and accelerations. 
This is generally related to mean-field models of multi-robot systems~\cite{elamvazhuthi2019mean} and specifically the concept of assuming microscopically Brownian particles and the resulting macroscopic evolution of a swarm described by a distribution relates directly to known modeling approaches in swarm robotics based on Langevin equations and Fokker-Planck equations~\cite{hamann08b,prorok2011multi}.

%costs of gathering of information about robot positions

%collective transport, piano transport problem
We propose an optimal energy-efficient control mechanism that minimizes positional variance and steers the robot system's mean position to a target position. In particular, our work starts by showing that it is possible to obtain mathematically a mapping such that underactuated robot systems take a partial form. However, due to the complexity of the dynamics (coupling of the inertia matrix), it is not possible to design a controller. Another challenge is the fact that the control input matrix $G(q)$ is time-variant. However, in \cite{Ortega2002}, \cite{Nuno2013} and \cite{Acosta2005}, the authors assumed the input matrix to be in the form of $G = [I_m \hspace{0.2 em} 0_s]^\top$. This assumption on the input matrix $G$ can be applied only to simple robot structures. In this paper, we cover the case where $G(q) = [G_u(q) \hspace{0.2 em} G_a(q)]^\top$ has a general form. Therefore, we relax this assumption on the input matrix $G$ differently from what is done in \cite{Ortega2002}, \cite{Nuno2013} and \cite{Acosta2005}. Indeed, finding a transformation to have the robot systems taking a partial form is not straightforward. Nonetheless, a new generalized coordinate transformation framework is proposed to decouple the system. This allows the development of an optimal control mechanism with the prime objective of energy efficiency. In control theory, several techniques exist to design energy-efficient control laws~ \cite{kirk1970optimal}. 
However, the state-dependent Riccati equation (SDRE)~\cite{Cloutier1997} does not cancel nonlinear terms, which is advantageous because canceling such nonlinearities would significantly increase the control signals~\cite{Freeman95}. Furthermore, SDRE parameters and characterizes the system to a state-dependent coefficient (SDC) form that is useful for immediate stability analysis. Then, we show that our control design provides set point tracking (stabilization) with semi-global properties. Our proof is based on the Lyapunov stability criterion~\cite{isidori1995nonlinear}.

We assume that tracking all robots is possible and calculating the variance is feasible. 
In future extensions, we could ensure scalability of this centralized approach by only tracking boundaries of the `robot cloud,' assume a uniform distribution of robots within or even measure robot densities, and hence getting rid of all microscopic details.

%%%%%%%%%%%%%%%%%%%%%%%%%%%%%%%
%

%
%%%%%%%%%%%%%%%%%%%%%%%%%%%%%%%%%
\noindent {\bf Notation.} $I_n$ is the $n \times n$ identity matrix and $0_{n \times s}$ is an
$n \times s$ matrix of zeros, and $0_n$ is an $n$--dimensional column vector of zeros. Given $a_i \in \rea,\; i \in \bar n := \{1,\dots,n\}$, we denote with $\col(a_i)$ the $n$--dimensional column vector with
elements~$a_i$. For any matrix $A \in \rea^{n \times n}$, $(A)_{i} \in \rea^n$ denotes the $i$--th column, $(A)^{i}$ the $i$--th row and $(A)_{ij}$ the $ij$--th element. %That is, 
With $e_i\in \rea^n,\; i \in
\bar n$ we denote the Euclidean basis vectors, $(A)_i := A e_i$, $(A)^{i}:=e_i^\top A$ and $(A)_{ij}:= e_i^{\top} A e_j$. For $x \in \rea^n$, $S \in \rea^{n \times n}$, $S=S^\top
>0$, we denote the Euclidean norm $|x|^2:=x^\top x$, and the weighted--norm $\|x\|^2_S:=x^\top S x$. Given a function $f:  \rea^n \to \rea$ we define the differential operators
$$
\nabla f:=\left(\frac{\displaystyle \partial f }{\displaystyle \partial x}\right)^\top,\;\nabla_{x_i} f:=\left(\frac{\displaystyle
\partial f }{\displaystyle \partial x_i}\right)^\top,
$$
where $x_i \in \rea^p$ is an element of the vector $x$. For a mapping $g : \rea^n \to \rea^m$, its Jacobian matrix is defined as
$$\nabla g:=\left [\begin{array}{cc}(\nabla g_1)^\top \\
\vdots\\ (\nabla g_m)^\top \end{array}\right],$$ where $g_i:\rea^n \to \rea$ is the $i$-th element of $g$.
%%%%%%%%%%%%%%%%%%%%%%%%%%%%
%
%%%%%%%%%%%%%%%%%%%%%%%%%%%
\section{Euler-Lagrange dynamics}
\lab{sec2}
%%%%%%%%%%%%%%%%%%%%%%%%%%%%%%
%
We consider an underactuated robot system with dynamics described by the well-known Euler-Lagrange (EL) equations of motion (second-order ordinary differential equations)
 \begequarr 
 \label{lagr}
M(q) \ddot{q} + C(q,\dot{q}) \dot{q} + \nabla V(q) = G(q)  u,
 \endequarr
where $q \in \rea^n$ are the configuration variables, $u \in \RE^m$ are the control signals, $M(q)>0$ is the generalized inertia matrix, $C(q,\dot{q})$ represent the Coriolis and centrifugal forces, $V(q)$ is the systems potential energy, and $G(q)$ is the input matrix.

First, we make an assumption characterizing the class of generalized coordinate transformation~$T$ that we use here.
\begite
\item[{\bf A1.}] There exists an invertible mapping $\Phi:\rea^n \to \rea^n$, such that
\begin{equation}
\nabla_q \Phi(q) = T^{-1}(q).
\end{equation}
is invertible for all $q$.
\endite

\beglem \label{lem1}
Consider a mapping $\Phi: \rea^n \to \rea^n$ that satisfies {\bf A.1} and define the generalised coordinate transformation as follows
\begin{equation} \label{cot}
\bfq= \Phi(q).
\end{equation}
Then, the EL dynamics \eqref{lagr} can be written as follows
\begin{equation} \label{lagrcot}
\mathcal{M}(\bfq) \ddot{\bfq} + \mathcal C(\bfq,\dot{\bfq}) \dot{\bfq} + \nabla \mathcal V(\bfq) = \mathcal G(\bfq)  u,  
\end{equation}
where
\begin{eqnarray}
\dot \bfq &:=& T^{-1}(q) \dot q \label{dcot}\\
\mathcal{M}(\bfq) &:=& T^\top(q) M(q) T(q) \Big|_{q=\Phi^{-1}(\bfq)} \\
\mathcal V(\bfq) &:=& V(q)\Big|_{q=\Phi^{-1}(\bfq)} \\
\mathcal G &:=& T^\top(q) G(q) \Big|_{q=\Phi^{-1}(\bfq)}
\end{eqnarray}
and $\mathcal C(\bfq,\dot{\bfq})\dot{\bfq}$ are the Coriolis and centrifugal forces associated with mass matrix~$\mathcal M(\bfq)$ that we can compute by
\begin{equation}
\mathcal C(\bfq,\dot{\bfq})\dot{\bfq} = \left[ \nabla_\bfq [\mathcal M(\bfq) \dot \bfq] - \frac 12 \nabla_\bfq^\top [\mathcal M(\bfq) \dot \bfq] \right] \dot \bfq.
\end{equation}
The Lagrangian in the new generalised coordinates is
\begin{equation}
\mathcal L (\bfq,\dot \bfq) = \frac 12 \dot \bfq^\top \mathcal{M}(\bfq) \dot \bfq  -  \mathcal V(\bfq).  
\end{equation}
\endlem
\begin{proof}
The proof follows from the coordinate invariance property of the EL equations (or from straightforward calculation computing the derivative of the coordinate transformation and using the original dynamics).
\end{proof}
\begin{remark} Notice that the matrix $T(.)$ can be used to shape the form of the mass matrix $\mathcal M(.)$ in the new generalized coordinates. However, %we restrict our attention to 
we consider only
%all 
invertible matrices $T(.)$ that satisfy the integrability assumption {\bf A.1}. That is, given an invertible matrix $T(.)$, we assume there exists an invertible mapping $\Phi: \rea^n \to \rea^n$ that satisfies
$$
\dot \Phi(q) = T(q) \dot{q}.
$$
Therefore, the generalized coordinated transformation \eqref{cot} is well-defined.
\end{remark}
We consider now mechanical systems \eqref{lagr} with an input matrix of the general form
\begequarr
G(q) = \lef[{c} G_{u}(q)\\ G_{a}(q)  \rig], \label{G general}
\endequarr
where $\rank G(q) = m<n $, and $G_{a}(q)$ is an invertible $m \times m$ matrix. $G_u(q)$ and $G_a(q)$ are the underactuated and actuated components of $G(q)$ , respectively.
The EL dynamics (\ref{lagr}) is coupled when $G_{u}(q) \not \equiv 0$.

%\begin{remark} The particular form of the input matrix in (\ref{G general}) is %particularly useful for system of particles in particular for autonomous aerial %vehicles (UAVs) in SE(3) like a swash mass helicopter with six degrees of %freedom \cite{Salamat2020} and robotic systems like a swash mass pendulum %\cite{Salamat2021}.
%\hh{might be a bit misleading here if we actually speak about ground robots?}
%\end{remark}

Furthermore, to simplify the notation, we partition the generalized coordinates and velocity  as $q=\col(q_u,q_a)$, ${\dot q}=  \col ({\dot q}_u, {\dot q}_a)$ with $q_a,\dot q_a \in \RE^m$ and $q_u,\dot q_u \in \RE^{s}$, and partition the inertia and Coriolis matrices as
 \begequarrs
 M (q)= \left[ \begarr{cc} m_{uu} (q) & m^\top_{au} (q) \\ m_{au} (q) & m_{aa}  (q) \endarr \right],
 \endequarrs
\begequarrs
 C (q,\dot q)= \left[ \begarr{cc} c_{uu} (q) & c_{ua} (q) \\ c_{au} (q) & c_{aa}  (q) \endarr \right],
 \endequarrs
where $m_{aa}:\rea^n \to \RE^{m\times m}$, $m_{au}:\rea^n \to \RE^{s \times m}$, $m_{uu}:\rea^n \to \RE^{s\times s}$, $c_{aa}:\rea^n \times \rea^n \to \rea^{m\times m}$,  $c_{au}:\rea^n \times \rea^n \to \rea^{s \times m}$, $c_{ua}:\rea^n \times \rea^n \to \rea^{m \times s}$,  $c_{uu}:\rea^n \to \RE^{s\times s}$.

Next, we impose several assumptions to show particular forms of the EL dynamics \eqref{lagr} under generalized coordinate transformations. 

\begite
\item[{\bf A2.}] There exists a function $\Phi_a:\rea^m \to \rea^s$, such that
\begin{equation}
\dot \Phi_a(q_a)=m_{uu}^{-1} m_{au}^{\top} \dot q_a.
\end{equation}

\item[{\bf A3.}] The inertia matrix depends only on the actuated variables $q_a$, {\em i.e.},  $M(q)=M(q_a)$.

\item[{\bf A4.}] The sub-block matrix $m_{uu}$ of the inertia matrix is constant.

\item[{\bf A5.}] The potential energy can be written as
$$
V(q)=V_a(q_a)+V_u(q_u).
$$
%\item [{\bf A6.}]  The matrix $m_{au}(q_u)$ satisfies $ \nabla_{q_u} (m_{au}^\top \dot{q}_a ) = [ \nabla_{q_u} (m_{au}^\top \dot{q}_a )]^\top$
\endite

\begin{proposition} \label{prop1} The  dynamics of the system \eqref{lagr}, under assumption {\bf A.2} and using the generalised coordinates $\bfq=\text{\em col}(\bfq_1,\bfq_2)= \Phi(q)$, can be written as follows
\begequarr
&& {\bf m}_{uu} \ddot{\bfq}_1 + \left[ \nabla_{\bfq_1} ({\bf m}_{uu} \dot \bfq_1) - \frac 12 \nabla^\top_{\bfq_1} ({\bf m}_{uu}^s \dot \bfq_1)\right] \dot \bfq_1 +\nonumber \\ 
&& \left[ \nabla_{\bfq_2} ({\bf m}_{uu}^s \dot \bfq_2) - \frac 12 \nabla^\top_{\bfq_1} ({\bf m}_{aa} \dot \bfq_2)\right] \dot \bfq_2 + \nonumber \\ 
&& \nabla_{\bfq_1} \mathcal V(\bfq) = G_{u}(q) u \label{ddq1_1} \\
&& {\bf m}_{aa}^s \ddot{\bfq}_2 + \left[ \nabla_{\bfq_1} ({\bf m}_{aa}^s \dot \bfq_2) - \frac 12 \nabla^\top_{\bfq_2} ({\bf m}_{uu} \dot \bfq_1)\right] \dot \bfq_1 \nonumber \\
&&+  \left[ \nabla_{\bfq_2} ({\bf m}_{aa}^s \dot \bfq_2) - \frac 12 \nabla^\top_{\bfq_2} ({\bf m}_{aa}^s \dot \bfq_2)\right] \dot \bfq_2 + \nonumber \\ 
&& \nabla_{\bfq_2} \mathcal V(\bfq) =  \bigg[ G_{a}(q)-G_{u}(q)m_{au}m^{-1}_{uu} \bigg] u, \label{ddq2_1}
\endequarr
\end{proposition}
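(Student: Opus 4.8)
The plan is to exhibit the explicit change of coordinates that assumption \textbf{A.2} makes available and then to invoke Lemma~\ref{lem1}. Concretely, I would take
\[
\bfq_1 := q_u + \Phi_a(q_a), \qquad \bfq_2 := q_a, \qquad \Phi(q) := \col\big(\bfq_1,\bfq_2\big),
\]
with $\Phi_a$ the map supplied by \textbf{A.2}. Differentiating and using \textbf{A.2} gives $\dot\bfq_1 = \dot q_u + m_{uu}^{-1}m_{au}^\top\dot q_a$ and $\dot\bfq_2 = \dot q_a$, that is $\dot\bfq = T^{-1}(q)\dot q$ with
\[
T^{-1}(q) = \begin{bmatrix} I_s & m_{uu}^{-1}m_{au}^\top \\ 0 & I_m \end{bmatrix}, \qquad T(q) = \begin{bmatrix} I_s & -m_{uu}^{-1}m_{au}^\top \\ 0 & I_m \end{bmatrix}.
\]
Since this $T^{-1}$ equals $\nabla_q\Phi$ and is block--triangular with identity diagonal blocks, it is invertible for every $q$ and $\Phi$ is a diffeomorphism with inverse $q_a = \bfq_2,\ q_u = \bfq_1 - \Phi_a(\bfq_2)$. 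Hence \textbf{A.1} is satisfied and Lemma~\ref{lem1} applies to this $T$.

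The decisive step is the congruence $\mathcal M(\bfq)=T^\top M T$. The whole point of the above $T$ is that this product annihilates the inertial coupling: a direct block multiplication yields
\[
\mathcal M(\bfq) = \blockdiag\big(\,{\bf m}_{uu},\ {\bf m}_{aa}^s\,\big), \qquad {\bf m}_{aa}^s := m_{aa} - m_{au}m_{uu}^{-1}m_{au}^\top,
\]
the lower block being the Schur complement of $m_{uu}$ in $M$. The transformed input matrix comes from the same triangular factor,
\[
\mathcal G = T^\top G = \begin{bmatrix} G_u(q) \\ G_a(q) - m_{au}m_{uu}^{-1}G_u(q) \end{bmatrix},
\]
which furnishes the right--hand sides $G_u(q)u$ of \eqref{ddq1_1} and $[\,G_a(q)-m_{au}m_{uu}^{-1}G_u(q)\,]u$ of \eqref{ddq2_1}, while $\mathcal V(\bfq)=V(\Phi^{-1}(\bfq))$ gives the potential terms $\nabla_{\bfq_1}\mathcal V$ and $\nabla_{\bfq_2}\mathcal V$ as the two blocks of $\nabla_\bfq\mathcal V$.

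It then remains to expand the Coriolis/centrifugal term. I would substitute the block--diagonal $\mathcal M$ into the Christoffel--type formula of Lemma~\ref{lem1},
\[
\mathcal C(\bfq,\dot\bfq)\dot\bfq = \Big[\nabla_\bfq(\mathcal M\dot\bfq) - \tfrac12\nabla_\bfq^\top(\mathcal M\dot\bfq)\Big]\dot\bfq,
\]
noting that the generalized momentum is now $\mathcal M\dot\bfq=\col({\bf m}_{uu}\dot\bfq_1,\ {\bf m}_{aa}^s\dot\bfq_2)$. Partitioning the $n\times n$ Jacobian $\nabla_\bfq(\mathcal M\dot\bfq)$ into its $\bfq_1$-- and $\bfq_2$--derivative blocks and grouping the symmetrized $\tfrac12\nabla^\top$ contributions block by block reproduces the bracketed coefficients of $\dot\bfq_1$ and $\dot\bfq_2$; the first $s$ rows of the transformed equation then give \eqref{ddq1_1} and the last $m$ rows give \eqref{ddq2_1}.

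I expect the main obstacle to be twofold. First, one must check that \textbf{A.2} really does more than state a velocity--level identity: it is exactly the integrability condition guaranteeing that $m_{uu}^{-1}m_{au}^\top\dot q_a$ is an exact differential, so that the \emph{position} coordinate $\bfq_1=q_u+\Phi_a(q_a)$ exists and not merely its rate $\dot\bfq_1$; this is what legitimizes the use of Lemma~\ref{lem1}. Second, the Coriolis bookkeeping is delicate: although the congruence removes the off--diagonal inertia, the surviving blocks ${\bf m}_{uu}$ and ${\bf m}_{aa}^s$ still depend on all of $\bfq$, so the dynamics do not fully decouple, and carefully tracking which block each $\nabla_{\bfq_i}$ differentiates and matching every term against its transposed counterpart is where the argument is most error--prone.
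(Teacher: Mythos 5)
Your proposal is correct and takes essentially the same route as the paper's own proof: exhibit the triangular factor $T(q)$ supplied by {\bf A.2}, verify {\bf A.1}, invoke Lemma~\ref{lem1} to obtain the block-diagonal inertia $\mathcal M = \blockdiag({\bf m}_{uu},\, {\bf m}_{aa}^s)$ with ${\bf m}_{aa}^s$ the Schur complement together with the transformed input $T^\top G$, and expand the Christoffel-type Coriolis formula blockwise --- you merely fill in the ``simple calculations'' that the paper leaves implicit. One remark: your lower input block $G_a(q) - m_{au}m_{uu}^{-1}G_u(q)$ is the dimensionally consistent form of the factor that the statement prints as $G_a(q)-G_u(q)m_{au}m^{-1}_{uu}$, so your derivation in fact corrects a typo in the proposition rather than deviating from it.
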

where
\begequarr
\left[\begin{array}{c} \bfq_1 \\ \bfq_2 \end{array} \right] &=& \left[\begin{array}{c} q_u + \Phi_a(q_a)\\ q_a  \end{array} \right] \label{qtran1} \\
{\bf m}_{aa}^s (\bfq) &=& m_{aa}(q)-m_{au}(q) m_{uu}^{-1}(q) m_{au}^\top(q) \Big|_{q = \Phi^{-1}(\bfq) }, \\
{\bf m}_{uu} (\bfq) &=& m_{uu}(q) \Big|_{q = \Phi^{-1}(\bfq) }, \\
{\bf m}_{au} (\bfq) &=& m_{au}(q) \Big|_{q = \Phi^{-1}(\bfq) }.
\endequarr
\begin{proof} First notice that, under assumption {\bf A.2}, the coordinate transformation \eqref{qtran1} satisfies {\bf A.1} with 
\begequ
T(q)= \left[ \begin{array}{cc} I_s & -m_{uu}^{-1} m^{\top}_{au} \\ 0_{m\times s} & I_m \end{array} \right].
\endequ
Then, from Lemma \ref{lem1} we obtain that the dynamics can be written in the form \eqref{lagrcot} with 
\begequarr
\left[\begin{array}{c} \dot \bfq_1 \\ \dot \bfq_2 \end{array} \right] = \left[\begin{array}{cc} I_s & m_{uu}^{-1} m^{\top}_{au} \\ 0_{m\times s} & I_m  \end{array} \right] \left[\begin{array}{c} \dot q_u \\ \dot q_a \end{array} \right]
\endequarr
and Lagrangian
\begequ \label{Lcotu}
\mathcal L (\bfq,\dot \bfq) = \frac 12  \left[ \begin{array}{cc} \dot \bfq_1^\top & \dot \bfq_2^\top \end{array} \right] \left[ \begin{array}{cc} {\bf m}_{uu} & 0_{s\times m} \\ 0_{m\times s} & {\bf m}_{aa}^s \end{array} \right] \left[ \begin{array}{c} \dot \bfq_1 \\ \dot \bfq_2 \end{array} \right]  -  \mathcal V(\bfq).  
\endequ
The dynamics \eqref{ddq1_1}-\eqref{ddq2_1} follows, after some simple calculations, from the EL formula using the Lagrangian \eqref{Lcotu}.
\qed
\end{proof}
\begcor\label{cor1} The system \eqref{lagr} satisfying {\bf A.1-A.3} can be written as in the EL form as follows
\begequarr
&& m_{uu}(q_a) \ddot{\bfq}_1 +\nabla_{\bfq_1} \mathcal V(\bfq_1, q_a) =  G_u(q)u, \\
&& m^s_{aa} \ddot{q}_a + \left[ \nabla_{q_a} [m_{aa}^s(q_a) \dot q_a] - \frac 12 \nabla^\top_{q_a} [m_{aa}^s(q_a) \dot q_a]\right] \dot q_a + \nonumber \\
&& \nabla_{q_a} \mathcal V(\bfq_1, q_a) =  \bigg[ G_{a}(q)-G_{u}(q)m_{au}m^{-1}_{uu} \bigg] u,
\endequarr
with $m_{aa}^s (q_a) = m_{aa}(q_a)-m_{au}^\top(q_a) m_{uu}^{-1} m_{au}(q_u)$.

In addition, if assumption {\bf A.3-A.4} also holds, then the EL dynamics can be written as follows
\begequarr
&& m_{uu} \ddot{\bfq}_1  +   \nabla_{q_u} V_u \Big|_{q_u=\bfq_1 - \Phi_a(q_a)} = G_u(q)u , \label{plf11} \\
&& m_{aa}^s \ddot q_a + \left[ \nabla_{q_a} [m_{aa}^s \dot q_a] - \frac 12 \nabla_{q_a} [m_{aa}^s \dot q_a] \right] \dot q_a \nonumber \\
&& + \nabla_{q_a} V_a - m_{au} m_{uu} \nabla_{q_u} V_u \Big|_{q_u = \bfq_1 - \Phi_a(q_a)} = \nonumber\\
&&\bigg[ G_{a}(q)-G_{u}(q)m_{au}m^{-1}_{uu} \bigg] u. \label{plf12}
\endequarr
\endcor
%\endcor
\begin{proof}
The proof follows from Proposition \ref{prop1} and {\bf A.1-A.3} by setting in \eqref{ddq1_1}-\eqref{ddq2_1} the following conditions: $\bfq_1=q_u+\Phi_a(q_a)$, $\bfq_2 = q_a$, $m_{uu}$ is a constant matrix, and ${\bf m}_{aa}^s(\bfq)= {m}_{aa}^s(q_a)$. The second part follows from the fact that, under assumption {\bf A.4}, the potential function is $\mathcal V(\bfq)=V_a(q_a) + V_u(\bfq_1-\Phi_a(q_a))$.
\qed
\end{proof}
\begrem Notice that the system in the partial linear form \eqref{plf11}-\eqref{plf12} has been used to design a PID passivity-based controller in \cite{Salamat2021, Letizia2021}. In that work, an outer partial feedback linearization (PFL) control is used to obtain the desired form, which compromises the robustness of the closed loop. However, this PFL control can be avoided by using a generalized change of coordinates as shown in Corollary \ref{cor1}.
\endrem
The generalized coordinate transformation in Proposition \ref{prop1} is also useful (as it will be shown in the next section) for the holonomic swarm of particles.

\begin{center}
\begin{figure}[t]
\includegraphics[scale=0.72]{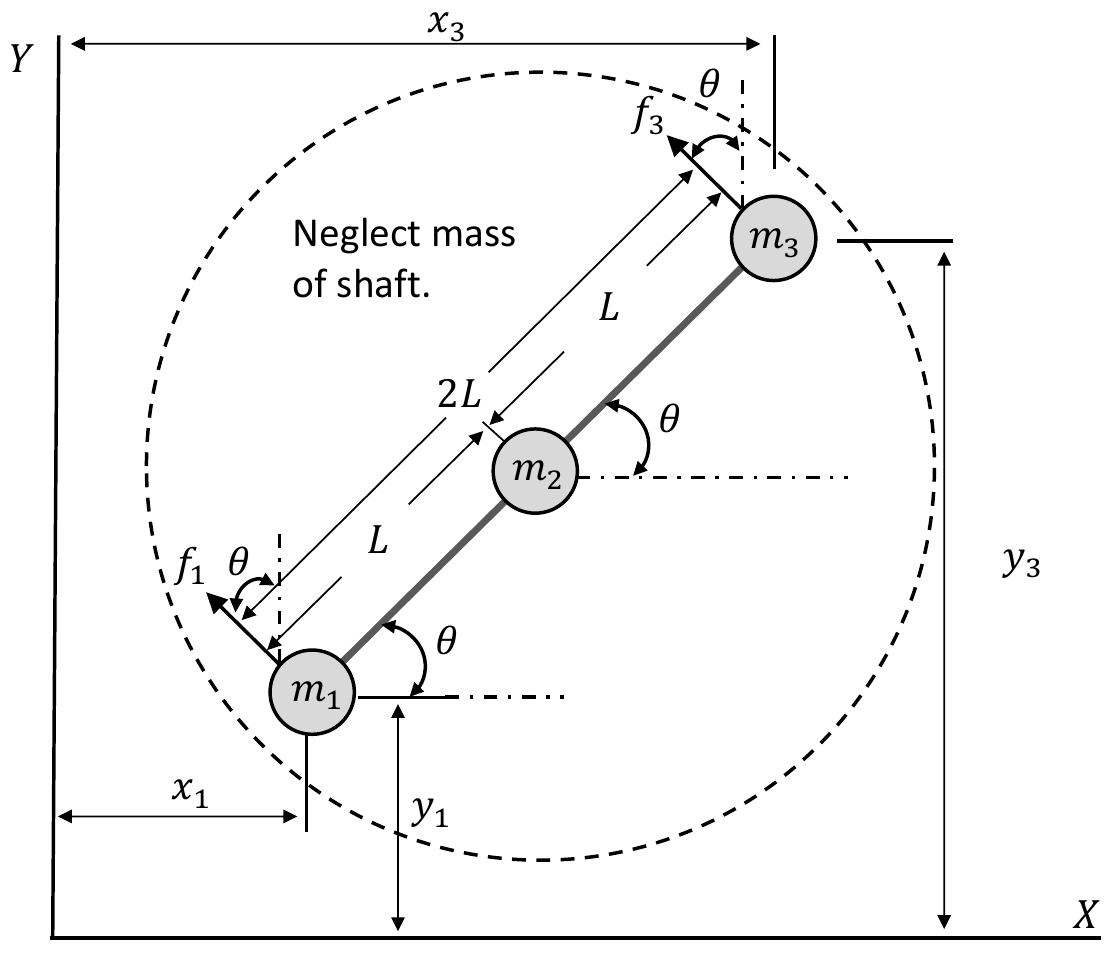}
\centering
\vspace{-0.1em}
\caption{Holonomic robot as a system of particles.}
\vspace{-1.0em}
\label{Robot}
\centering
\end{figure}
\end{center}
%%%%%%%%%%%%%%%%%%%%%%%%%%%
\section{Holonomic System}
\lab{sec3}

We consider a holonomic robot with masses $m_1$, $m_2$ and $m_3$, as shown in Fig.~\ref{Robot} that are rigidly fastened to the mass-less shaft and are free to move in the 2D plane. We now set up the equation of motion of the holonomic robot using convenient coordinates $q = [q_1, \hspace{0.2em} q_2, \hspace{0.2em} q_3]^\top = [x_1, \hspace{0.2em} y_1, \hspace{0.2em} \theta]^\top$. An external force $f_1$ is applied to $m_1$ in the direction of $-x_1$ and $y_1$ respectively, and $f_3$ to $m_3$ in the direction of $-x_3$ and $y_3$ respectively. To simplify the notation, we %find it convenient to
assume that all representative particle masses are the same (e.g., $m_i= m$ for $i=1,\hspace{0.2em} \dots,\hspace{0.2em} 3$). Applying Lagrange's equations, it immediately follows that 

\begequarr \label{Lagrange HR}
\mathcal L =\dfrac{1}{2} \dot{q}^{\top} \begin{bmatrix}
           3m &             0 & -3Lm\sin(\theta) \\[2pt]
             0 &           3m &  3Lm\cos(\theta) \\[2pt]
-3Lm\sin(\theta) & 3Lm\cos(\theta) &        5L^2m \\
\end{bmatrix}\dot{q}
\endequarr

where $(x_1,\hspace{0.2em} y_1)$ is positioned at the center of the first mass particle, $L$ is the distance between each mass, and $\theta$ is the inclination angle (see Fig.~\ref{Robot}). The equations of motion can be written in compact form as

\begequarr 
 \label{lagr HS}
M(q) \ddot{q} + C(q,\dot{q}) \dot{q} = G(q)  u,
\endequarr

where $M(q)$ is the generalized inertia matrix

\begequarr
\label{M HS}
M(q) =  \begin{bmatrix}
           3m &             0 & -3Lm\sin(\theta) \\[2pt]
             0 &           3m &  3Lm\cos(\theta) \\[2pt]
-3Lm\sin(\theta) & 3Lm\cos(\theta) &        5L^2m \\
\end{bmatrix}.
\endequarr
$C(q,\dot{q})$ is the  Coriolis matrix
\begequarr
C(q,\dot{q}) = \nonumber\\ 
\begin{bmatrix}
        0 &     0 &     -3L\dot{\theta}m\cos(\theta) \\[2pt]
        0 &     0 &      -3L\dot{\theta}m\sin(\theta)\\[2pt]
\dfrac{3L\dot{\theta}m\cos(\theta)}{2} & \dfrac{3L\dot{\theta}m\sin(\theta)}{2} & h_s\\
\end{bmatrix}\label{C HS}
\endequarr
with 
\begequarr \label{C HS r}
h_s = -\dfrac{3Lm(\dot{x}_1\cos(\theta) + \dot{y}_1\sin(\theta))}{2}. 
\endequarr

Therefore, the elements of the inertia matrix for the holonomic robot are given by

\begequarr
m_{uu} &=& 3m \nonumber\\
m_{au} &=& m_{au}^{\top} = \begin{bmatrix} 0 & -3Lm\sin(\theta) \end{bmatrix} \nonumber\\
m_{aa} &=& \begin{bmatrix} \label{elements of HR}         
3m              &    3Lm\cos(\theta) \\[2pt]
3Lm\cos(\theta) &              5L^2m \\
\end{bmatrix}.
\endequarr

The virtual work is given by

\begequarr 
 \label{WW}
\delta W = \begin{bmatrix}
-(f_1+f_3)\sin(\theta)\delta x_1 \\[2pt]
+ (f_1+f_3)\cos(\theta)\delta y_1 \\[2pt]
2Lf_3\delta \theta\\
\end{bmatrix}.
\endequarr

Without loss of generality, $G(q)$ can be written as

\begequ \label{G HR}
G(q) = \begin{bmatrix}
-sin(\theta) & 0 \\[2pt]
 cos(\theta) & 0 \\[2pt]
       0 & 1 \\
\end{bmatrix},
\endequ
with $u = [u_1, \hspace{0.2em} u_2]^{\top} = [f_1+f_3, \hspace{0.2em} 2Lf_3]^{\top} $.Therefore, $G_u(q) = \begin{bmatrix} -sin(\theta) & 0 \end{bmatrix} $ and $G_a(q) = \begin{bmatrix} \cos(\theta) & 0\\ 0 & 1\end{bmatrix}$. Note that $G_a(q)$ is an invertible $
2 \times 2$ matrix.

\subsection{Mechanical Properties of the Holonomic Robot}
The holonomic robot as defined by (\ref{lagr HS}) has several fundamental properties, which
can be used to facilitate the design of an automatic control mechanism.

\begite
\item[{\bf P.1}]  $M(q)$ is a positive definite matrix.

\item[{\bf P.2}] The inertia matrix depends only on the actuated variables $q_a$, i.e., $M(q) =M(q_a)$.

\item[{\bf P.3}] The sub-block matrix $m_{uu}$ of the inertia matrix is constant. 

\item[{\bf P.4}] From (\ref{M HS}) and (\ref{C HS}), and by using \ref{C HS r}, we get
\begequarr \label{skew-symmetrie}
&&  \hspace{-6em}\dot{M}(q) - 2C(q,\dot{q}) = \nonumber \\
&& \hspace{-6em}\begin{bmatrix}                     0&                      0& \dfrac{9L\dot{\theta}m\cos(\theta)}{2} \\[2pt]
                     0&                      0& \dfrac{9L\dot{\theta}m\sin(\theta)}{2} \\[2pt]
-\dfrac{9L\dot{\theta}m\cos(\theta)}{2}& -\dfrac{9L\dot{\theta}m\sin(\theta)}{2}&                     0 \\
\end{bmatrix},
\endequarr
which is a skew-symmetric matrix.
\endite

The system has three degrees of freedom and only two actuators, hence, we have an underactuated mechanical system. We have nonlinearities because the generalized inertia matrix is off-diagonal and the input matrix is highly coupled. Due to the lack of more actuators, this system cannot be fully linearized using exact feedback linearization. 
However, it is still possible to apply PFL to the system, such that the translational dynamics $q_1 = x_1$ and $q_2 = y_1$ become a double integrator. As already mentioned before, PFL compromizes the robustness of the closed-loop. However, the PFL can be avoided by using the proposed transformation, as shown in Corollary~\ref{cor1}. Given the properties {\bf P.1}—{\bf P.4}, we apply the generalized coordinate transformations based on Proposition~\ref{prop1} to decouple the system.

\begin{figure*}[b]
\begequarr \nonumber
\begin{bmatrix} \dot{x}_1 \\[6pt] 
\ddot{x}_1 \\[6pt]
\dot{y}_1 \\[6pt]
\ddot{y}_1 \\[6pt]
\dot{\theta} \\[6pt]
\ddot{\theta} \\
\end{bmatrix}=\begin{bmatrix} 0 & 1 & 0 & 0 & 0 & 0 \\[6pt]
0& 0& 0& 0& 3Lm\sin(\theta)\dot{\theta}(-\dot{\theta} +1)+5L& 3Lm\cos(\theta)(2\dot{\theta} - 1)\\[6pt]
0 & 0 & 0 & 1 & 0 & 0\\[6pt]
0& 0& 0& 0& L\dot{\theta}^2\cos(\theta)+3sin(\theta) &   2L\dot{\theta}\sin(\theta)\\[6pt]
0 & 0 & 0 & 0 & 0 & 1\\[6pt]
0 & 0 & 0 & 0 & 0& 0\\
\end{bmatrix} \begin{bmatrix} x_1 \\[6pt] 
\dot{x}_1 \\[6pt]
y_1 \\[6pt]
\dot{y}_1 \\[6pt]
\theta \\[6pt]
\dot{\theta} \\
\end{bmatrix}
+ \begin{bmatrix}   0&  0\\[2pt]
         -Lsin(\theta)&  0\\[2pt]
                        0 &                0\\[2pt]
        \dfrac{5L\cos(\theta)}{6m}& -\dfrac{\cos(\theta)}{2m}\\[2pt]
                        0&                0\\[2pt]
               -\dfrac{1}{2m}&      \dfrac{1}{2m}\\
               \end{bmatrix}\begin{bmatrix} u_1\\[4pt]
u_2\\
\end{bmatrix}
\endequarr 
\end{figure*}

\begin{proposition} \label{prop2}
Considering the holonomic robot in (\ref{lagr HS}), the dynamical system model can be rewritten as

\begequarr\label{DS1}
&&\ddot{x}_1 = f_{x_1}(\theta, \dot{\theta}) - u_1L\sin(\theta), \\ \label{DS2}
&&\ddot{y}_1 = f_{y_1}(\theta, \dot{\theta}) + \dfrac{5\cos(\theta)}{6m}u_1 - \dfrac{\cos(\theta)}{2m}u_2,\\\label{DS3}
&&\ddot{\theta} =  -\dfrac{1}{2m}u_1 + \dfrac{1}{2m}u_2 \;. 
\endequarr
\end{proposition}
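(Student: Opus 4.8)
The plan is to treat Proposition~\ref{prop2} as a direct inverse-dynamics computation: since the holonomic model \eqref{lagr HS} carries no potential term, solving the equations of motion for the accelerations gives
\[
\ddot q = M^{-1}(q)\big(G(q)\,u - C(q,\dot q)\,\dot q\big),
\]
and the three scalar rows of this identity are exactly \eqref{DS1}--\eqref{DS3}, with the velocity-dependent drift terms $f_{x_1}(\theta,\dot\theta)$ and $f_{y_1}(\theta,\dot\theta)$ collecting the Coriolis contribution $-M^{-1}C\dot q$ and the input coefficients collecting $M^{-1}G$. So the whole statement reduces to computing $M^{-1}$ in closed form and multiplying it against the two known vectors.

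First I would invert the inertia matrix \eqref{M HS}. A cofactor expansion gives $\det M = 18 L^2 m^3$ (the cross terms collapse through $\sin^2\theta+\cos^2\theta=1$), and the adjugate then yields a closed-form $M^{-1}$ whose entries are rational in $\sin\theta,\cos\theta$ with denominator proportional to $L^2 m$. Next I would evaluate $C(q,\dot q)\dot q$ using \eqref{C HS}--\eqref{C HS r}. The key observation here is a cancellation: in the third component the term $h_s\dot\theta$ exactly cancels $\tfrac{3Lm\dot\theta}{2}(\dot x_1\cos\theta+\dot y_1\sin\theta)$, so that row vanishes, while the first two rows collapse to $-3Lm\cos\theta\,\dot\theta^2$ and $-3Lm\sin\theta\,\dot\theta^2$. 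This is precisely what forces the drift to depend only on $(\theta,\dot\theta)$ and not on $(\dot x_1,\dot y_1)$, matching the asserted arguments of $f_{x_1},f_{y_1}$.

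With these two ingredients in hand, I would carry out the two matrix products. Multiplying $-M^{-1}$ by the reduced vector $C\dot q$ and simplifying with the Pythagorean identity repeatedly, I expect the drift to collapse to clean expressions of the form $f_{x_1}=L\cos\theta\,\dot\theta^2$ and $f_{y_1}=L\sin\theta\,\dot\theta^2$, with the $\theta$-row drift equal to zero (hence the pure input form of \eqref{DS3}). The input map $M^{-1}G$, with $G$ from \eqref{G HR}, is then read off row by row to obtain the coefficients of $u_1,u_2$ in each equation; once more $\sin^2\theta+\cos^2\theta=1$ produces the simplifications that clear the trigonometric clutter.

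The main obstacle is purely the algebraic bookkeeping of the $3\times3$ inverse and the subsequent products: tracking the powers of $L$ and $m$ through the adjugate and the cancellations is where sign and scaling errors are most likely to slip in. A cleaner route that I would prefer, avoiding the full inverse altogether, is to solve the linear system $M\ddot q = Gu - C\dot q$ directly by elimination (or Cramer's rule) one acceleration at a time; because $M$ is only $3\times3$ and the right-hand side is sparse, this isolates each scalar equation \eqref{DS1}--\eqref{DS3} with far less intermediate computation than forming $M^{-1}$ explicitly.
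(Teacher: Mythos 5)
Your intermediate algebra is in fact correct: $\det M = 18L^2m^3$, the product $C(q,\dot q)\dot q$ collapses to $\bigl(-3Lm\cos\theta\,\dot\theta^2,\,-3Lm\sin\theta\,\dot\theta^2,\,0\bigr)^\top$, and the resulting drift terms are $L\cos\theta\,\dot\theta^2$, $L\sin\theta\,\dot\theta^2$ and $0$. But the last step of your plan fails, and it fails structurally, not through bookkeeping. Carrying your computation to the end, the first row of $M^{-1}G$ is
\begin{equation*}
\Bigl[\,-\tfrac{5\sin\theta}{6m}\quad \tfrac{\sin\theta}{2Lm}\,\Bigr],
\end{equation*}
so that in the original coordinates
\begin{equation*}
\ddot x_1 \;=\; L\cos\theta\,\dot\theta^2 \;-\;\tfrac{5\sin\theta}{6m}\,u_1\;+\;\tfrac{\sin\theta}{2Lm}\,u_2 ,
\end{equation*}
which carries a nonzero $u_2$ term whenever $\sin\theta\neq 0$. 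This contradicts \eqref{DS1}, whose entire point is that the first equation is decoupled from $u_2$. The coupling is unavoidable in the original coordinates: $x_1$ and $\theta$ are inertially coupled through the off-diagonal entry $-3Lm\sin\theta$ of \eqref{M HS}, and $u_2$ acts directly on the $\theta$ row of \eqref{G HR}, so $u_2$ necessarily propagates into $\ddot x_1$ under any row-wise solution of $M\ddot q = Gu - C\dot q$, whether by explicit inversion or by Cramer's rule.

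The idea you are missing is exactly what the paper's one-line proof invokes: Proposition \ref{prop1}. The first variable in \eqref{DS1} is not the original $x_1$ but the transformed coordinate $\bfq_1 = q_u + \Phi_a(q_a)$ of \eqref{qtran1}; here $\dot\Phi_a = m_{uu}^{-1}m_{au}^\top\dot q_a = -L\sin\theta\,\dot\theta$, hence $\bfq_1 = x_1 + L\cos\theta$. In this coordinate, Corollary \ref{cor1} (all of {\bf P.1}--{\bf P.4} hold and $V\equiv 0$) gives $3m\,\ddot\bfq_1 = G_u(q)u = -\sin\theta\,u_1$: no $u_2$ and no drift. The decoupled structure asserted by the proposition is therefore a property of the generalized coordinate transformation (the paper abuses notation by still writing $x_1$), and it cannot be recovered by computing $M^{-1}(Gu - C\dot q)$ in the original coordinates; this is why your proposed route cannot prove the statement as claimed. (A secondary remark: the printed coefficients in \eqref{DS1}--\eqref{DS3} are themselves dimensionally inconsistent, e.g.\ $-L\sin\theta\,u_1$ where the transformation yields $-\tfrac{\sin\theta}{3m}u_1$, and $\pm\tfrac{1}{2m}$ in \eqref{DS3} where both routes yield $-\tfrac{1}{2Lm}u_1+\tfrac{1}{2L^2m}u_2$; but the essential gap in your attempt is the missing change of coordinates, which is the core contribution the proposition is meant to showcase.)
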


\begin{proof}
By applying Proposition \ref{prop1} the result follows.
\qed
\end{proof}
In the next section, we address questions related to the automatic control of a particle swarm that minimizes energy by applying the transformed underactuated model in (\ref{DS1})-(\ref{DS3}). 
We prove that the mean of configuration variables is controllable and provide conditions under which the variance is also controllable.

\section{Automatic Control for a Holonomic Swarm}
In this section, we present an automatic controller for a swarm of particles that minimizes the energy. 
We show that it only relies on the first two moments of the swarm configuration variables, i.e., the position and the orientation angle distribution. The main objective of our automatic control approach is to act on forces optimally so that particles can reach the desired target position $q^{\ast}=[x^{\ast}, \hspace{0.2em} y^{\ast}, \hspace{0.2em} \theta^{\ast}]^\top$ with the stable Euler angle ($\lim\limits_{t \to \infty} \theta =0$ ).

\subsection{Swarm Dynamical System Model}
By defining $x = (x_1,\hspace{0.2em} \dot{x}_1, \hspace{0.2em} y_1, \hspace{0.2em} \dot{y}_1,\hspace{0.2em} \theta, \hspace{0.2em} \dot{\theta}  )$, the dynamics of (\ref{DS1})-(\ref{DS3}) can be written as

\begequarr
\dot{x} = A(x)x + B(x)u.  \label{SDC}
\endequarr
The elements of $A (x)$ and $B(x)$ are given at the bottom of this page. 
The system is nonlinear, since matrices $A(x)$ and $B(x)$ both depend on the current state variables. Firstly, we analyze the number of controllable states as given by the following definition. 

\begin{definition}\label{Def1}
The states in (\ref{SDC}) are controllable if the pair $\{A(x),\hspace{0.2em} B(x)\}$ is point-wise controllable. This can be observed by the rank of the controllability matrix
\begequarr
\mathfrak{C} = \begin{bmatrix} B(x)\\[2pt]
A(x)B(x)\\[2pt]
\vdots\\[2pt]
A^5(x)B(x)\\
\end{bmatrix}.
\endequarr
\end{definition}
The consequence of Definition \ref{Def1} is that the $\mathfrak{C}$ matrix for the system in (\ref{SDC}) has the full rank (i.e., $\rank(\mathfrak{C})=6$). Therefore, all states are controllable. Previous work has shown that the mean and variance of many particles for simple fully actuated particles are controllable~\cite{Shahrokhi2015,Shahrokhi2018}. 
%This section illustrates
Next, we show how we can %extend and 
stabilize the nonlinear underactuated particles by a global state-feedback controller designed via state-dependent Riccati equation (SDRE) control~\cite{Cloutier1997}. 
Motivated by (\ref{SDC}) and defining the mean states $\bar{x}$ that represent the mean states of $N$ particles, we can write the dynamical system model of the swarm as

\begequarr
\dot{\bar{x}} = A(\bar{x})\bar{x} + B(\bar{x})u \;.  \label{Mean SDC}
\endequarr

Interestingly, analyzing the controllability of the swarm dynamics results in the same form as in (\ref{SDC}), hence, the mean states are controllable.

\subsection{Control Law}
Our objective is to find minimum energy inputs that steer the swarm to a given target state defined on $t \in [t_0,  \hspace{0.2em} t_f] = [0, \hspace{0.2em}  \infty]$. To do so, consider now the following cost functional

\begequarr 
\mathcal{J} = \dfrac{1}{2} \int^{\infty}_{0} \bar{x}^{\top}\mathcal{Q}(\bar{x})\bar{x} + u^{\top}\mathcal{R}(\bar{x})u \;, \label{CF}
\endequarr
with respect to the state $\bar{x}$ and control input $u$ subject to the nonlinear dynamical system model constraint
\begequarr
\dot{\bar{x}} = A(\bar{x})\bar{x} + B(\bar{x})u,  \nonumber
\endequarr
where $\mathcal{Q}(\bar{x}) \geq 0$ penalizes the state, and $\mathcal{R}(\bar{x}) >0$  penalizes the control effort for all $\bar{x}$.  
We aim for a nonlinear state-feedback controller $u$ that stabilize solutions of problem (\ref{Mean SDC})-(\ref{CF}). 
\begin{remark}
Cloutier~\cite{Cloutier1997} obtains the nonlinear feedback controller via SDRE. Our interest is to provide an alternative interpretation of solving the problem (\ref{Mean SDC})-(\ref{CF}) via Pontryagin’s minimum principle~\cite{kirk1970optimal}.
\end{remark}
From (\ref{Mean SDC}) and (\ref{CF}), the Hamiltonian can be written as
\begequarr \nonumber
H &=& \dfrac{1}{2} \bigg[\bar{x}^{\top}\mathcal{Q}(\bar{x})\bar{x} + u^{\top}\mathcal{R}(\bar{x})u \bigg] \\ 
&&+ p^{\top}(t) \bigg[ A(\bar{x})\bar{x} + B(\bar{x})u   \bigg], \label{Hamiltonian}
\endequarr
where $p(t)$ is the adjoint vector. The necessary condition is derived by differentiating (\ref{Hamiltonian}) with respect to $u$ which yields
\begequarr \label{NC1}
&&\nabla_{u} H = \mathcal{R}(\bar{x})u + B^{\top}(\bar{x})p = 0 \;.
\endequarr
We obtain the nonlinear feedback controller 
\begequarr \label{NFC}
u = -\mathcal{R}^{-1}(\bar{x})B^{\top}(\bar{x})p \;.
\endequarr
Now, we define $p \triangleq P(x)x$, where the matrix $P(x)$ can be obtained by solving the algebraic Riccati equation
\begequarr \nonumber
&&A^{\top}(\bar{x})P+PA(\bar{x})-PB(\bar{x})\mathcal{R}^{-1}(\bar{x})B^{\top}(\bar{x})P\\ \label{Riccati}
&&+\mathcal{Q}(\bar{x})=0.
\endequarr

By that we fulfill the second optimality condition
\begequarr \label{NC2}
\dot{p} = -\nabla_{\bar{x}} H(\bar{x}, t, p) \;.
\endequarr
Therefore, as long as the two conditions in (\ref{NC1}) and (\ref{NC2}) hold, it is always possible to construct a nonlinear feedback controller that solves the problem (\ref{Mean SDC})-(\ref{CF}). The closed-loop solution for this a feedback controller is at least a local optimum and possibly the global optimum.

\subsection{Stability Analysis} 
\begin{theorem}\label{Th1}
Consider the dynamical system model (\ref{Mean SDC}), with the feedback controller (\ref{NFC}). Assume in addition that for a constant input weighting matrix $\mathcal{R} > 0$, the state weighting matrix $Q(x) >0$ can be chosen, such that $\dot{P}(x) <0$ for all $x$,  where $P(\bar{x})$ is the solution of (\ref{Riccati}). Then the zero equilibrium of the closed-loop system is semi-globally stable.
\end{theorem}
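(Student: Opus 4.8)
The plan is to use Lyapunov's direct method with the value function of the optimal control problem as the candidate. I would take
\begin{equation}
V(\bar{x}) = \frac{1}{2}\bar{x}^\top P(\bar{x})\bar{x},
\end{equation}
where $P(\bar{x})$ is the stabilizing solution of the algebraic Riccati equation \eqref{Riccati}. Because pointwise controllability of $\{A(\bar{x}),B(\bar{x})\}$ has already been established (the controllability matrix $\mathfrak{C}$ has full rank) and $\mathcal{Q}(\bar{x})>0$, standard Riccati theory guarantees a unique symmetric solution $P(\bar{x})>0$ for all $\bar{x}$, so $V$ is positive definite.

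First I would form the closed loop. Substituting the identification $p=P(\bar{x})\bar{x}$ into the feedback law \eqref{NFC} yields $u=-\mathcal{R}^{-1}B^\top P\bar{x}$, so that \eqref{Mean SDC} becomes $\dot{\bar{x}}=A_{cl}(\bar{x})\bar{x}$ with $A_{cl}:=A-B\mathcal{R}^{-1}B^\top P$. Differentiating $V$ along these trajectories gives
\begin{equation}
\dot V = \frac{1}{2}\bar{x}^\top\big(A_{cl}^\top P + P A_{cl}\big)\bar{x} + \frac{1}{2}\bar{x}^\top \dot P(\bar{x})\,\bar{x}.
\end{equation}
The second term is the essential departure from the constant-$P$ LQR case and is exactly where the hypothesis $\dot P(\bar{x})<0$ will enter.

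Next I would eliminate $A^\top P + PA$ using \eqref{Riccati}, i.e.\ $A^\top P + PA = PB\mathcal{R}^{-1}B^\top P-\mathcal{Q}$. A short computation then gives $A_{cl}^\top P + P A_{cl} = -\big(\mathcal{Q}+PB\mathcal{R}^{-1}B^\top P\big)$, whence
\begin{equation}
\dot V = -\frac{1}{2}\bar{x}^\top\big(\mathcal{Q}+PB\mathcal{R}^{-1}B^\top P\big)\bar{x} + \frac{1}{2}\bar{x}^\top \dot P\,\bar{x}.
\end{equation}
Since $\mathcal{Q}>0$, $\mathcal{R}>0$ (so $PB\mathcal{R}^{-1}B^\top P\ge 0$), and $\dot P<0$ by assumption, each term on the right is negative (semi)definite and $\dot V<0$ for all $\bar{x}\neq 0$. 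Lyapunov's direct method then yields asymptotic stability of the origin. For the semi-global claim I would note that the factorization and the pointwise stabilizability underlying $P$ hold on any prescribed compact set, and that $\mathcal{Q}$ can be tuned so that the sublevel sets of $V$ on which $\dot V<0$ contain an arbitrarily large ball; the region of attraction can thus be made to cover any compact set of initial conditions.

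The hard part will be the $\dot P$ term. In the genuine SDRE setting $P$ depends on $\bar{x}$ through a matrix-valued map, so $\dot P=\sum_i(\partial P/\partial \bar{x}_i)\dot{\bar{x}}_i$ is itself state- and trajectory-dependent, and proving $\dot P(\bar{x})<0$ globally is generally intractable; the theorem circumvents this by adopting it as a hypothesis. The main care needed is therefore to keep this term explicit rather than discarding it as in textbook LQR, and to verify that the sublevel sets of $V$ are compact and remain in the domain where the hypotheses hold, so that the semi-global conclusion is legitimate.
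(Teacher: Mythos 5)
Your proposal is correct and follows essentially the same route as the paper's own proof: the Lyapunov candidate $\bar{x}^{\top}P(\bar{x})\bar{x}$ (yours carries an immaterial factor of $\tfrac{1}{2}$), substitution of the closed-loop dynamics and the Riccati equation (\ref{Riccati}) to arrive at $\dot{V}=\tfrac{1}{2}\bar{x}^{\top}\bigl[\dot{P}(\bar{x})-\mathcal{Q}(\bar{x})-P(\bar{x})B(\bar{x})\mathcal{R}^{-1}B^{\top}(\bar{x})P(\bar{x})\bigr]\bar{x}$, and negativity concluded from $\mathcal{Q}>0$, $\mathcal{R}>0$, and the hypothesis $\dot{P}(\bar{x})<0$. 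Your added care about the positive definiteness of $P$, the explicit closed-loop algebra, and the compactness of sublevel sets for the semi-global claim goes beyond what the paper writes down, but the core argument is identical.
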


\begin{proof}
Consider the Lyapunov function candidate
\begequarr \label{Ly Func}
L(\bar{x}) = \bar{x}^{\top}P(\bar{x})\bar{x},
\endequarr

the time derivative of which, along the trajectories of the closed-loop dynamical system, is such that
\begequarr \nonumber
&&\dot{L}(\bar{x}) = \dot{\bar{x}}^{\top}P(\bar{x})\bar{x} + \bar{x}^{\top}P(\bar{x})\dot{\bar{x}} + \bar{x}^{\top}\dot{P}(\bar{x})\bar{x} \\
&&= \bar{x}^{\top} \bigg[ \dot{P}(\bar{x}) - \mathcal{Q}(\bar{x}) \nonumber
\\
&&-P(\bar{x})B(\bar{x})\mathcal{R}^{-1} B^{\top}(\bar{x})P(\bar{x})   \bigg]\bar{x}, \label{D Ly Func}
\endequarr

where $P(\bar{x})B(\bar{x})\mathcal{R}^{-1} B^{\top}(\bar{x})P(\bar{x}) >0$. In addition, based on the assumed selection of $Q(\bar{x})$, yields $\dot{P}(\bar{x}) <0$ and $\dot{L}(x)<0$, hence our claim. 
\qed
\end{proof}

\begin{algorithm}[b]
\SetAlgoLined
\textbf{begin procedure}\\
Step 1: Partition the generalised coordinates and velocity $q=\col(q_u,q_a)$, ${\dot q}=  \col ({\dot q}_u, {\dot q}_a)$.\\
Step 2: Construct the invertible mapping\\
\begequ \nonumber
\dot \Phi(q) = T(q) \dot{q},
\endequ
\\
with \\
\begequ \nonumber
T(q)= \left[ \begin{array}{cc} I_s & -m_{uu}^{-1} m^{\top}_{au} \\ 0_{m\times s} & I_m \end{array} \right].
\endequ

Step 3: Apply Proposition \ref{prop1}. 

\textbf{end procedure}
\caption{Generalised Coordinate Transformation for Underactuated Particle (\ref{lagr HS}).}
\label{Al1}
\end{algorithm}

\begin{algorithm}[b]
\SetAlgoLined
\caption{Hysteresis-based Mean and Variance Automatic Control}
\label{Al2}
 \begin{algorithmic}
    \Require Knowledge of underactuated particle swarm mean $\bar{x}$, variance $[\sigma^{2}_{x_{1}} \hspace{0.2em}   \sigma^{2}_{y_{1}} ]^\top$, the boundary of the search space $\{ x_{min} \hspace{0.2em}x_{max} \hspace{0.2em}y_{min} \hspace{0.2em}y_{max}\}$, and the desired mean state $q^{\ast}=[x^{\ast}, \hspace{0.2em} y^{\ast}, \hspace{0.2em} \theta^{\ast}]^\top$.\\
    
     $x_{goal} \leftarrow x^{\ast}$ , $y_{goal} \leftarrow y^{\ast}$\;
    \Loop\\
    \hspace{2.2em} \eIf{$\sigma^{2}_{x_{1}} > \sigma_{max}$}{
     $x_{goal} \leftarrow  x_{min}$\;}
     {
     $x_{goal} \leftarrow  x^{\ast}$
     }\;
    \eIf{$\sigma^{2}_{y_{1}} > \sigma_{max}$}
    {
     $y_{goal} \leftarrow  y_{min}$\;}
     {
     $y_{goal} \leftarrow  y^{\ast}$
    }\\
    Apply the automatic control law (\ref{NFC}) to regulate the underactuated swarm to the desired state $q^{\ast}=[x^{\ast}, \hspace{0.2em} y^{\ast}, \hspace{0.2em} \theta^{\ast}]^\top$
   \EndLoop
\end{algorithmic}
\end{algorithm}

\subsection{Controlling Mean and Variance}
The variances $\sigma^{2}_{x_{1}}$ and $\sigma^{2}_{y_{1}}$ of $N$  underactuated particle’s position is
\begequarr 
\bar{x}_{1} = \dfrac{1}{N}\sum^{N}_{i=1} x_{1i} , \hspace{1.5em} \sigma^{2}_{x_{1}} = \dfrac{1}{N}\sum^{N}_{i=1} (x_{1i}- \bar{x}_{1})^{2}, \\ \label{Varx}
\bar{y}_{1} = \dfrac{1}{N}\sum^{N}_{i=1} y_{1i} , \hspace{1.5em}\sigma^{2}_{y_{1}} = \dfrac{1}{N}\sum^{N}_{i=1} (y_{1i}- \bar{y}_{1})^{2}. \label{Vary}
\endequarr

The objective now is to control both, the mean and variance, effectively to ensure approaching a target position with minimum variance. Therefore, the selected strategy is the hysteresis-based approach following~\cite{Shahrokhi2015, Kloetzer2007}. The idea is that the automatic controller regulates the mean states of $N$ underactuated particles with radius $r$ but switches to minimizing variance if the variance exceeds the threshold $\sigma_{max} = 2.5r + \sigma^2_{optimal}(n,r)$ and until $\sigma_{min}=15r + + \sigma^2_{optimal}(n,r)$ is reached \cite{Shahrokhi2015}. The idea of using such values comes from Graham and Sloane \cite{Graham90}. They proved that the minimum variance to collect $N$ 2D circles with radius $r$ is $0.55Nr^2$.

Our proposed methodology in total consists of (1)~applying the generalized coordinate transformation shown in Algorithm~\ref{Al1} and (2)~proposing and analyzing the control mechanism to regulate the mean and variance of the swarm of underactuated particles shown in Algorithm \ref{Al2}.

\subsection{Fully-actuated vs Underactuated Particle Swarm}

We now consider a small swarm of $N=4$ particles to showcase the performance of the proposed control law and highlight the advantage of the underactuated particle swarm over the fully-actuated swarm~\cite{Shahrokhi2015}. The sampling time is set to $0.01$~s and the physical parameters are given in Table~\ref{tab:1}. The control gain matrices $\mathcal{Q}(\bar{x})$ and $\mathcal{R}$ are based on the assumptions of Theorem~\ref{Th1} and we get

\begequarr  \nonumber
&&\mathcal{Q}(\bar{x})= \bigg(1+0.01(\bar{x}_{5}-0)^2+0.01(\bar{x}_{6}-0)^2\bigg)\\ \nonumber
&&%.
0.001\,\diag(260\hspace{0.2em}, 1 \hspace{0.2em},260 \hspace{0.2em},1 \hspace{0.2em},160 \hspace{0.2em},100) ,\hspace{0.2em} \mathcal{R} = \begin{bmatrix} 50 & 0 \\[2pt]
0 & 50
\end{bmatrix}.
\endequarr
\begin{figure*}[t]
\begin{center}
\includegraphics[scale=0.4]{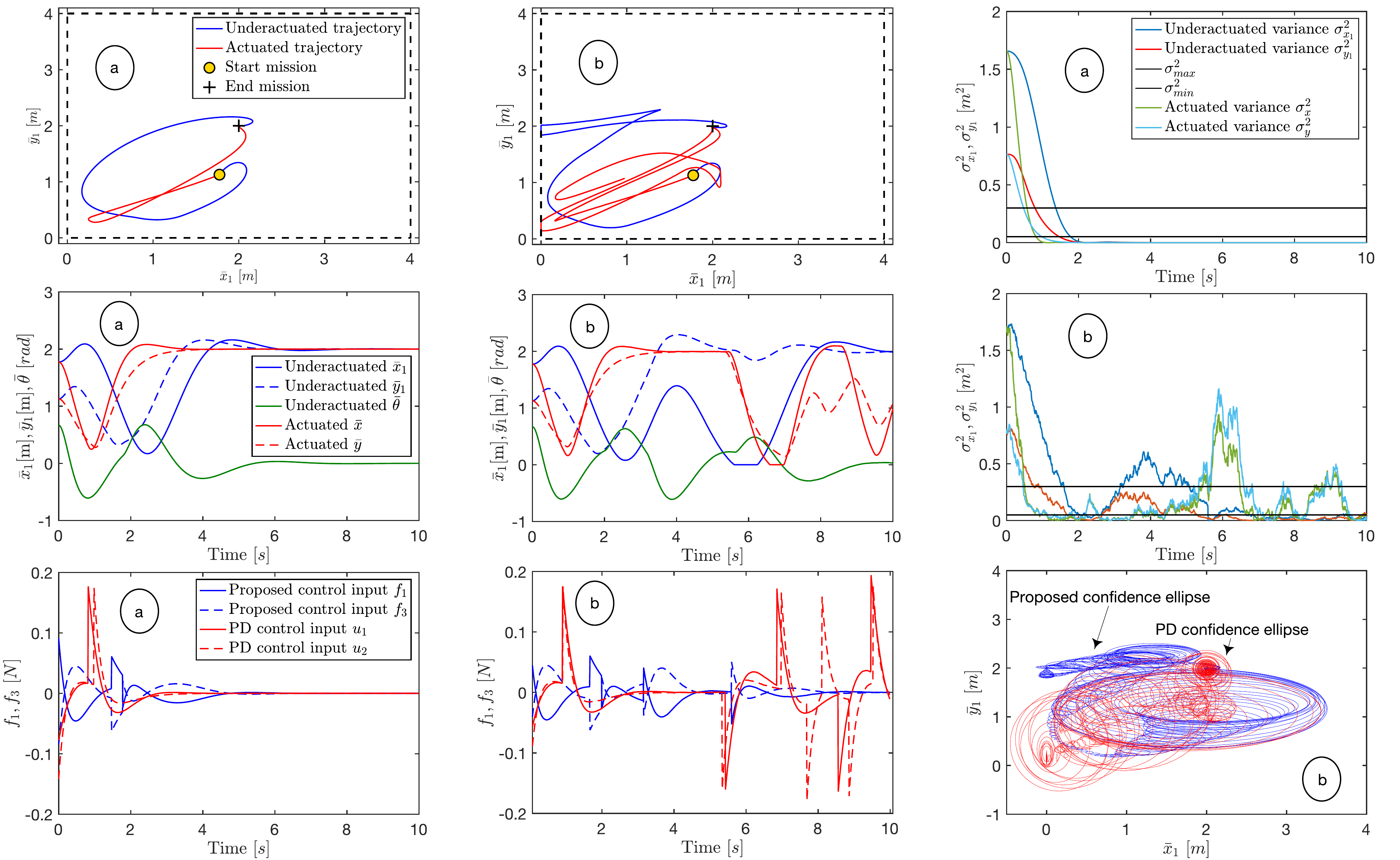} %[scale=0.27]{Pics/}
\centering
\vspace{-0.5em}
\caption{ Simulation of control laws. (a): Automatic control of the mean and variance  with 4 particles in the search space (black dashed line). Underactuated particle swarm under control algorithm \ref{Al1} and \ref{Al2} over the fully-actuated \cite{Shahrokhi2015}. (b): In the simulation, increased Brownian noise results in a more agile increase of variance.}   

\vspace{-1.5em}
\label{CL}
%\centering
\end{center}
\end{figure*}

\begin{table}[h]
\centering
\caption{Parameters of the simulated underactuated swarm}
\label{tab:1}
\begin{tabular}{|c|c|c|c|}
\hline
\textbf{Parameters}      & \textbf{Symbol} &  \textbf{Value} & \textbf{Unit}\\ \hline
mass            & $m$    & $0.01$ & kg \\ \hline
shaft & $L$     & $0.02$      & m \\ \hline
\end{tabular}
\end{table}

\begin{table}
  \begin{center}
    \begin{tabular}{r|l l}
      \textbf{Topic} & \textbf{Fully-actuated~\cite{Shahrokhi2015}} & \textbf{Underactuated} \\ \hline
      Language & JavaScript & \href{https://www.typescriptlang.org/}{TypeScript} \\  
      Physics Framework & \href{https://box2d.org/}{Box2D}~\cite{box2D} & \href{https://brm.io/matter-js/}{matter.js}~\cite{matterjs} \\
      Rendering & HTML, CSS, JS & \href{https://reactjs.org/}{react} \& \href{https://p5js.org/}{p5.js}
    \end{tabular}
    \caption{Technologies used in the robot simulations.}
    \label{tbl:diffTechs}
  \end{center}
\end{table}

\begin{figure}[h] 
    \centering
  \subfloat[frame 1]{%
       \includegraphics[width=0.46\linewidth]{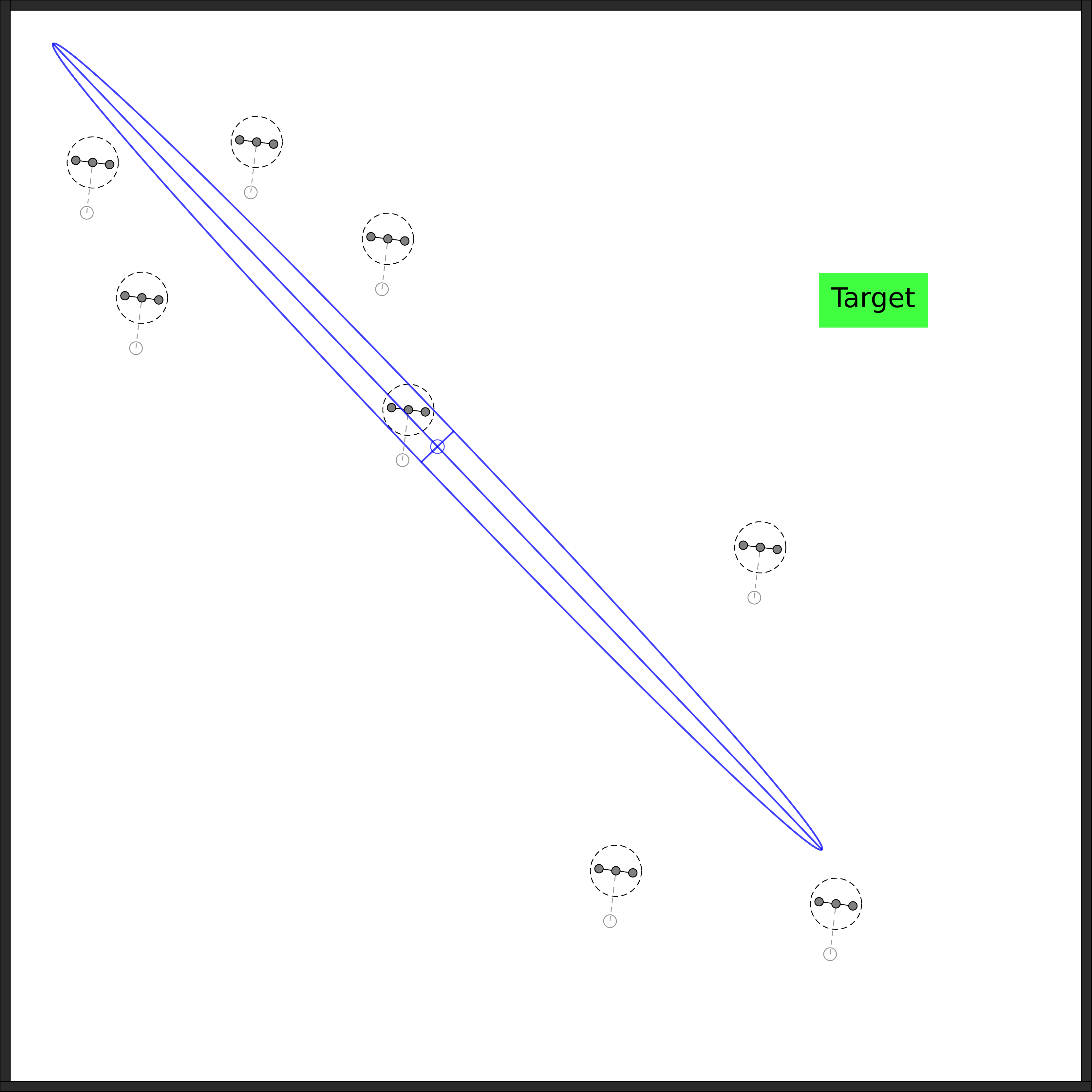}}
    \hfill
  \subfloat[frame 11]{%
        \includegraphics[width=0.46\linewidth]{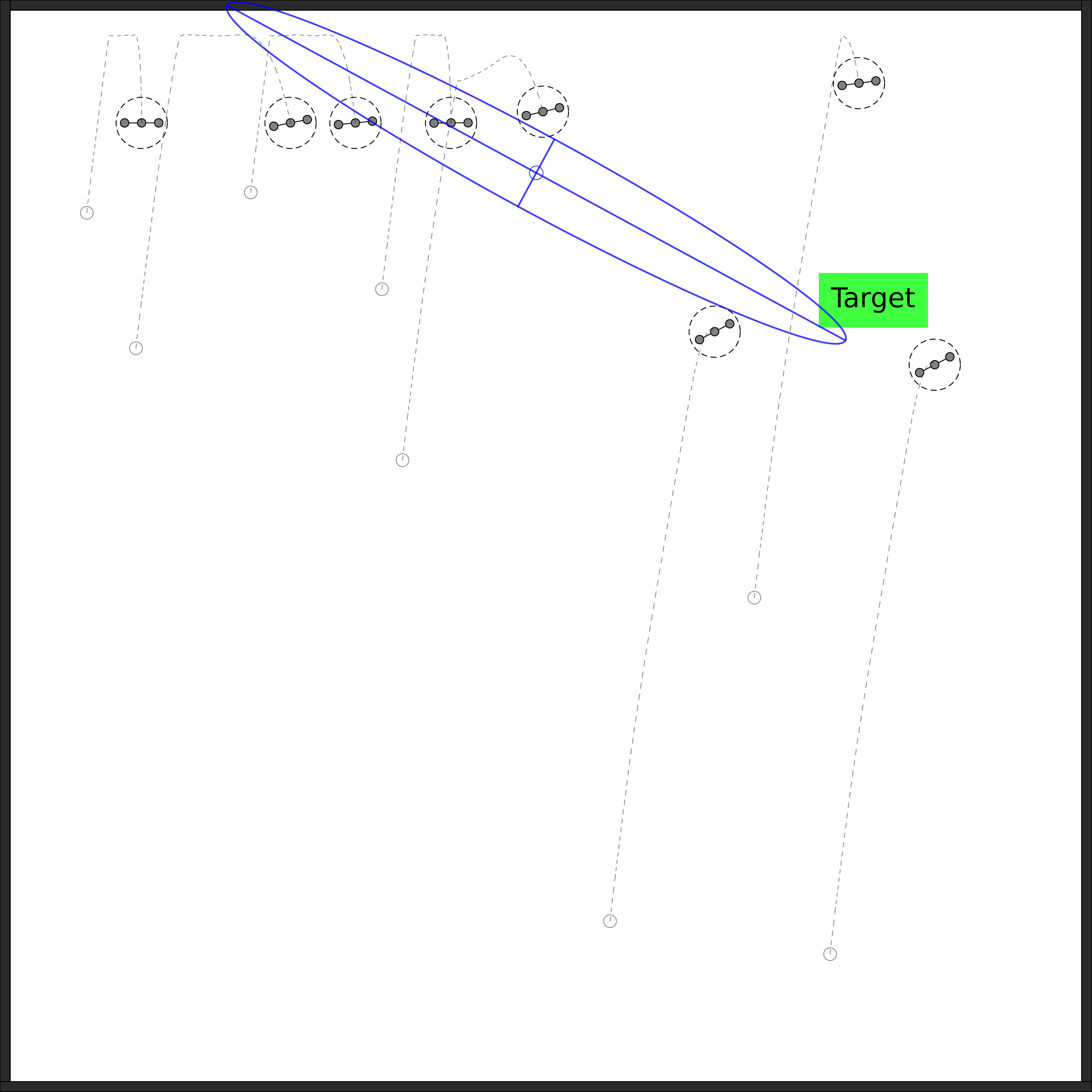}}
    \\
  \subfloat[frame 35]{%
        \includegraphics[width=0.46\linewidth]{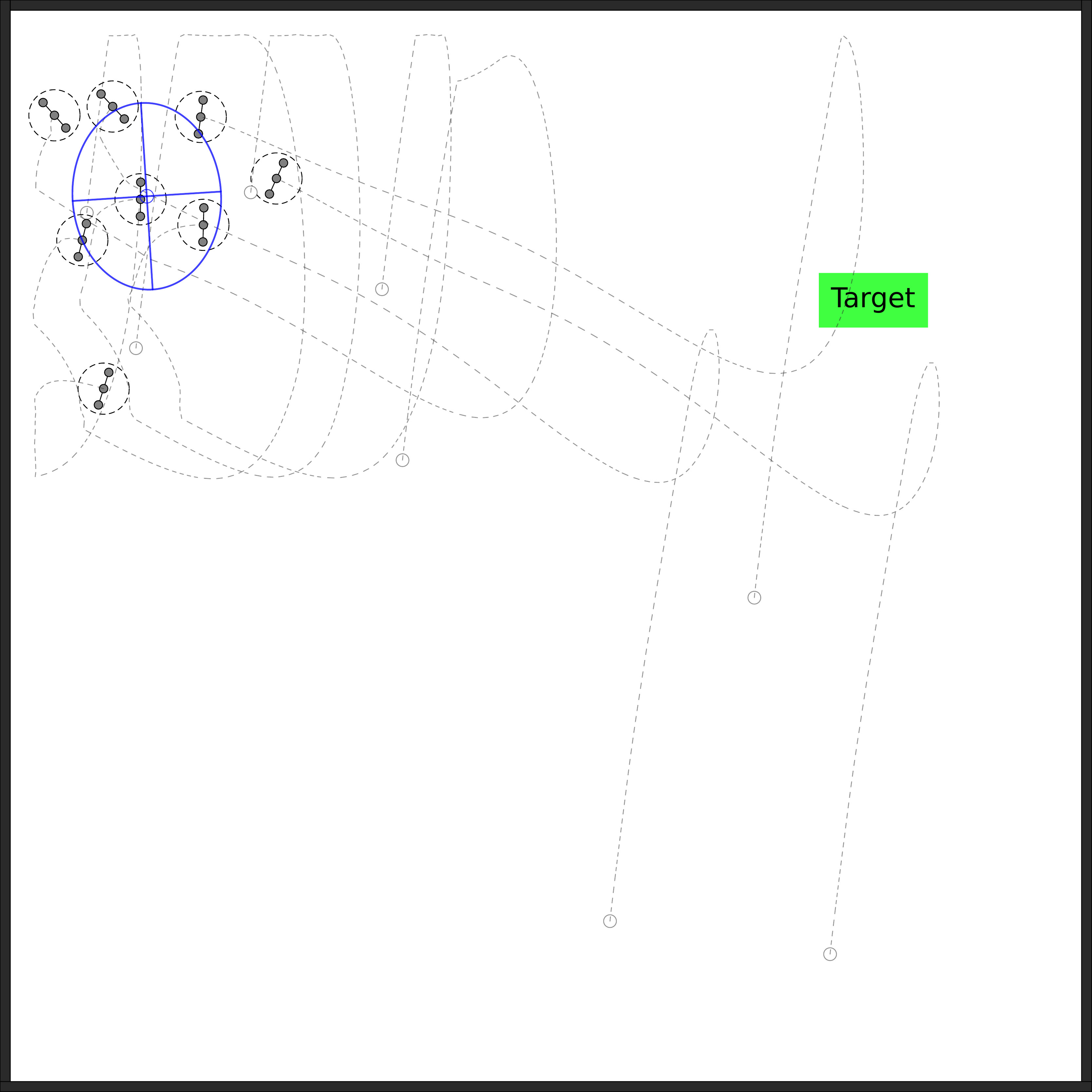}}
    \hfill
  \subfloat[frame 49]{%
        \includegraphics[width=0.46\linewidth]{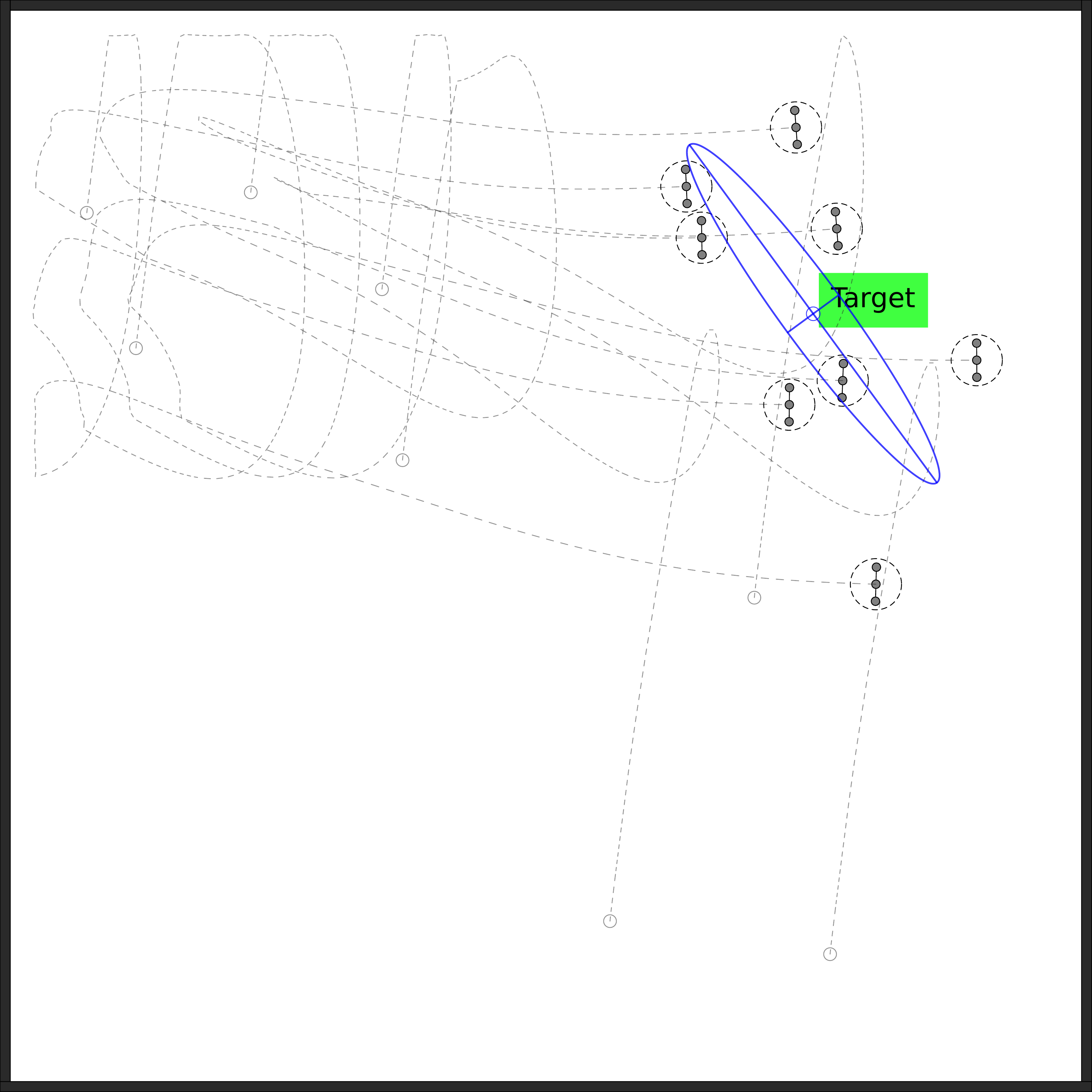}}
  \caption{Different stages of $N=8$  underactuated robots using our nonlinear controller (eq.~\ref{NFC}). (a)~Initial condition. (b)~Hysteresis-based control following Algorithm~\ref{Al2}. (c)~Minimizing mean and variance of robot positions utilizing the arena boundary. (d)~Regulating mean state and stabilizing the Euler angle~$\theta$.}
  \label{fig:robot simulation}
\end{figure}

%\begin{figure}[h]
%  \centering
%    \begin{subfigure}[t]{0.47\textwidth}
%        \centering
%        \includegraphics[width=30mm,scale=2]{Pics/frame1.pdf}
%         \caption{}
%    \end{subfigure}
%    \hfill
    %\begin{subfigure}[t]{0.47\textwidth}
    %    \centering 
    %    \includegraphics[width=30mm,scale=2]{Pics/frame2.pdf}
    %     \caption{}
    %\end{subfigure}
    %\vskip\baselineskip
    %\begin{subfigure}[t]{0.47\textwidth}
   %     \centering 
  %      \includegraphics[width=30mm,scale=2]{Pics/frame3.pdf}
  %       \caption{}
  %  \end{subfigure}
  %  \hfill
  %  \begin{subfigure}[t]{0.47\textwidth}
  %      \centering 
  %      \includegraphics[width=30mm,scale=2]{Pics/frame4.pdf}
  %       \caption{}
  %  \end{subfigure}
  %\caption{Different stages of underactuated swarm states. (a) %Initial condition phase. (b) Hysteresis-based control in Algorithm %\ref{Al2}. (c) Minimizing the mean and variance of the swarm (d) %Regulating mean state and stabilizing the Euler angle $\theta$ of %eight underactuated robots using control law (\ref{NFC}).}
%  \label{fig:robot simulation}
%\end{figure}

We compare to the approach of Shahrokhi et al.~\cite{Shahrokhi2015}. 
Their control gains for the PD controller are $K_{p_{x}} = 0.04$, $K_{p_{y}} = 0.03$, $K_{d_{x}} = 0.03$, and $K_{p_{y}} = 0.04$.

Fig. \ref{CL} compares our approach and the PD controller~\cite{Shahrokhi2015} for the obtained trajectories of mean, variance, and the control inputs. 
Even though the settling times seem satisfactory for both approaches, %the evolution of 
the trajectory and the control inputs %provide insightful observations. 
allow to discriminate the two approaches. 
The control inputs obtained through our approach are significantly smaller resulting in less energy consumption. Also note that there %is no 
are no sudden peaks in the control inputs. %The second observation is related to the energy consumption values. 
The fully-actuated approach consumes $\SI{1.0347}{\J}$ of energy compared to the under-actuated one that consumes~$\SI{0.4639}{\J}$. This is an energy reduction of approximately $57 \%$.

Both approaches minimize mean and variance. However, in the underactuated case, we stabilize the mean Euler angle~$\bar{\theta}$ with only two global control inputs. Hence, we reasonably balance the tradeoff between control complexity and system performance. 

\section{Multi-Robot Simulations}
We also show the result of a simple robot simulation for a swarm of $N=8$ robots to visualize our results in an accessible way. The software frameworks used to implement the simulations by Shahrokhi et al.~\cite{Shahrokhi2015} and ours are given in Table~\ref{tbl:diffTechs}. Time is discretized and the control signal is scaled by~$\delta t = 0.01$. The underactuated robots and arena boundaries are simulated as physical entities. 
Each underactuated robot has a random initial pose and the swarm's mean position has a randomly generated target pose.
The nonlinear controller described in Algorithm~\ref{Al2} steers the robots %particles 
from a starting position to a target position (equilibrium point of the swarm) with a stable Euler angle. 
%Both initial and final states are chosen randomly, i.e., each underactuated particle has a random position and the target state. 
 %$\delta t$ contains the time difference between the beginning of the previous frame and the beginning of the current frame in milliseconds. This ensures a smooth, frame rate independent simulation. 
%Note that the Euler angle $\theta$ (in radians) of an underactuated robot is clamped between $[0, \hspace{0.2em}2\pi]$. 
Fig.~\ref{fig:robot simulation} shows %transitions during the execution of our methodology in simple robot simulations. 
four screenshots during a representative simulation run. 
%Overall, Fig. \ref{fig:robot simulation} shows the ability of 
This result shows how the properties of the underactuated robot system (e.g., torque and inertia) %to generate sufficient torque 
are exploited to regulate the mean, to minimize the variance, and to steer the swarm to the target in the right pose.

\section{Conclusion}
We have proposed a centralized automatic stochastic control of large-scale robot systems for underactuated robots based on generalized change of coordinates. 
We transform underactuated robot systems to the partial form %(PF) 
that can be use for control design.
%For what matters the modeling part, the relevant mechanical properties of the robot system are identified. 
At the cost of centrally tracking all robots, we gain the benefit of an optimal energy-efficient control in a task of minimizing positional variance and moving the robot system's mean to a goal position.
The requirement of having to track all robots is unlikely to scale arbitrarily. A~future extension of our method could hence be to track only the boundary of a `cloud of robots' and their center of gravity. Possibly also the particle density could be measured instead of each individual robot.
There is no immediate way of transferring our method to a
decentralized approach, hence making it complementary to behavior-based approaches from swarm robotics that show increased robustness without a central element. However, centralized and decentralized approaches and their pros and cons are complementary to each other, which needs to be carefully considered by the designer for a given use case.
In future work, we plan to test and study our approach on real robots with different physical characteristics, such as the Kilobot and other robots with bigger masses. Also an extension of the method to a manipulation scenario~\cite{Shahrokhi2018} seems particularly relevant.

%%%%%%%%%%%%%%%%%%%%%%%%%%%%%%%%%%%%%
%\begin{thebibliography}{99}
%
%\bibliographystyle{plain}        % Include this if you use bibtex 
%\bibliography{Bib_PIPBC}           % and a bib file to produce the 

\bibliographystyle{IEEEtran}
\bibliography{bib}
%%%%

\end{document}